\documentclass[letterpaper, 10 pt, conference]{ieeeconf}

\IEEEoverridecommandlockouts

\usepackage{tgstyle}

\usepackage{graphicx}
\usepackage{epsfig}
\usepackage{amsmath,amssymb}
\usepackage{times,mathptmx}
\usepackage{algorithm}
\usepackage{algorithmic}

\algsetup{indent=1em, linenosize=\scriptsize, linenodelimiter=.}

\usepackage{multirow}
\usepackage{booktabs}
\let\labelindent\relax
\usepackage[inline]{enumitem}

\usepackage[caption=false,font=footnotesize]{subfig}

\usepackage{xcolor}
\usepackage[dvipsnames]{xcolor}
\usepackage{colortbl}
\usepackage{url}
\usepackage[font=small]{caption}
\definecolor{myBlue}{HTML}{87CEEB}
\definecolor{myRed}{HTML}{FA8072}
\definecolor{barblue}{HTML}{A9D0E5}
\definecolor{bartan}{HTML}{EBC4A3}
\definecolor{barred}{HTML}{E6A39E}
\usepackage{tikz}
\usetikzlibrary{arrows.meta, positioning}

\usepackage[colorlinks=true,
    linkcolor=blue,
    urlcolor=blue,
    citecolor=blue]{hyperref}

\usepackage{pifont}
\usepackage[table]{xcolor}   % 注意要带 [table] option
\definecolor{babyblue}{HTML}{E3F2FD}  
\definecolor{colNGSIM}{HTML}{2196F3}      % blue
\definecolor{colNuScenes}{HTML}{4CAF50}   % green
\definecolor{colApollo}{HTML}{FF9800}     % orange

\definecolor{colNGSIM}{HTML}{E8D5F5}      % creamy lavender
\definecolor{colNuScenes}{HTML}{FFE8C8}   % creamy peach
\definecolor{colApollo}{HTML}{D5EDD5}     % creamy mint green
\newcommand{\srcNGSIM}{{\colorbox{colNGSIM}{\scriptsize\makebox[0.9em][c]{\textbf{N}}}}}
\newcommand{\srcNuScenes}{{\colorbox{colNuScenes}{\scriptsize\makebox[0.9em][c]{\textbf{Nu}}}}}
\newcommand{\srcApollo}{{\colorbox{colApollo}{\scriptsize\makebox[0.9em][c]{\textbf{A}}}}}
\usepackage{cite}

\newtheorem{theorem}{Theorem}

\newtheorem{remark}{Remark}

\newcommand{\N}{\mathbb{N}}

\begin{document}

\title{\LARGE \bf
  \raisebox{-0.2\height}{\includegraphics[height=1.5em]{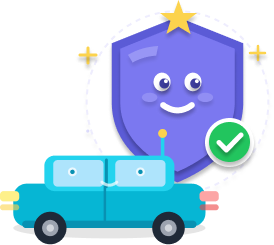}}%
  \hspace{0.5em}
Latent Dynamics--Aware OOD Monitoring for Trajectory Prediction with Provable Guarantees
}

\author{Tongfei Guo$^{1}$ and Lili Su$^{1}$%
\thanks{*This work was supported in part by the NSF CAREER award under grant~2340482. $^{1}$Department of Electrical and Computer Engineering, Northeastern University, Boston, MA, USA.
        {\tt\small \{guo.t, l.su\}@northeastern.edu}
        }%
\thanks{\scriptsize\textcopyright~2026 IEEE. Personal use of this material is permitted. Permission from IEEE must be obtained for all other uses, in any current or future media, including reprinting/republishing this material for advertising or promotional purposes, creating new collective works, for resale or redistribution to servers or lists, or reuse of any copyrighted component of this work in other works.}%
}

\maketitle

\begin{abstract}
In safety-critical Cyber-Physical Systems (CPS), trajectory prediction guides downstream planning and control. Deep learning models forecast well on validation data, but their reliability drops in out-of-distribution (OOD) scenarios driven by environmental uncertainty or rare traffic behaviors~\cite{hwang2024safeshift,filos2020can}. Such failures are often \emph{silent}: forecasts stay spatially plausible while accuracy collapses, and reported uncertainty does not rise~\cite{yang2024generalized}. Detection is hard because traffic conditions and interaction patterns keep evolving, yet the safety-critical nature of autonomous driving (AD) demands formal guarantees on detection delay and false-alarm rate. Following~\cite{guo2024building}, we reframe OOD monitoring as quickest changepoint detection (QCD), a principled statistical framework with well-established theory. We find that the evolution of prediction errors on in-distribution (ID) data is well modeled by a Hidden Markov Model (HMM). Building on this, we extend a recent cumulative Maximum Mean Discrepancy approach to our setting. The method needs no detailed prior knowledge of the post-change distribution, yet admits provable delay and false-alarm guarantees. On three real-world driving datasets, it reduces detection delay while staying robust to heavy-tailed distributions and unknown post-change conditions.
\end{abstract}

\section{INTRODUCTION}
\label{sec:introduction}

Autonomous driving (AD) systems run a closed perception-action loop. Trajectory prediction bridges perception and planning, feeding exogenous signals to downstream controllers~\cite{wei2022mpc}. When prediction quality drops, safety mechanisms can fail: control barrier functions may be invalidated, or model predictive control (MPC) problems rendered infeasible. Consider an autonomous vehicle (AV) at a snow-covered intersection---a case rare or absent in training. Snow shifts the distribution by altering road friction, visibility, and how nearby agents behave. A predictor trained under nominal conditions can then lose accuracy sharply.

Two lines of work assess prediction reliability: out-of-distribution (OOD) detection and uncertainty quantification (UQ). OOD detection works well for frame-wise classification in vision~\cite{feng2021improving}, but transfers poorly to trajectory prediction~\cite{guo2024building}: per-frame errors may be small yet accumulate as interactions evolve. UQ methods---Bayesian networks, ensembles, and Monte Carlo dropout~\cite{lakshminarayanan2017simple,gal2016dropout,liu2020energy,wang2025uncertainty,yang2024generalized}---estimate confidence across the forecasting horizon, but add heavy latency for multi-agent, multi-step prediction. Worse, their calibration assumes training data represent operating conditions, so under severe shift they can stay overconfident on OOD scenes---a real safety risk.

\begin{figure}
\vspace{0.5em}
    \centering
    \includegraphics[width=1\linewidth]{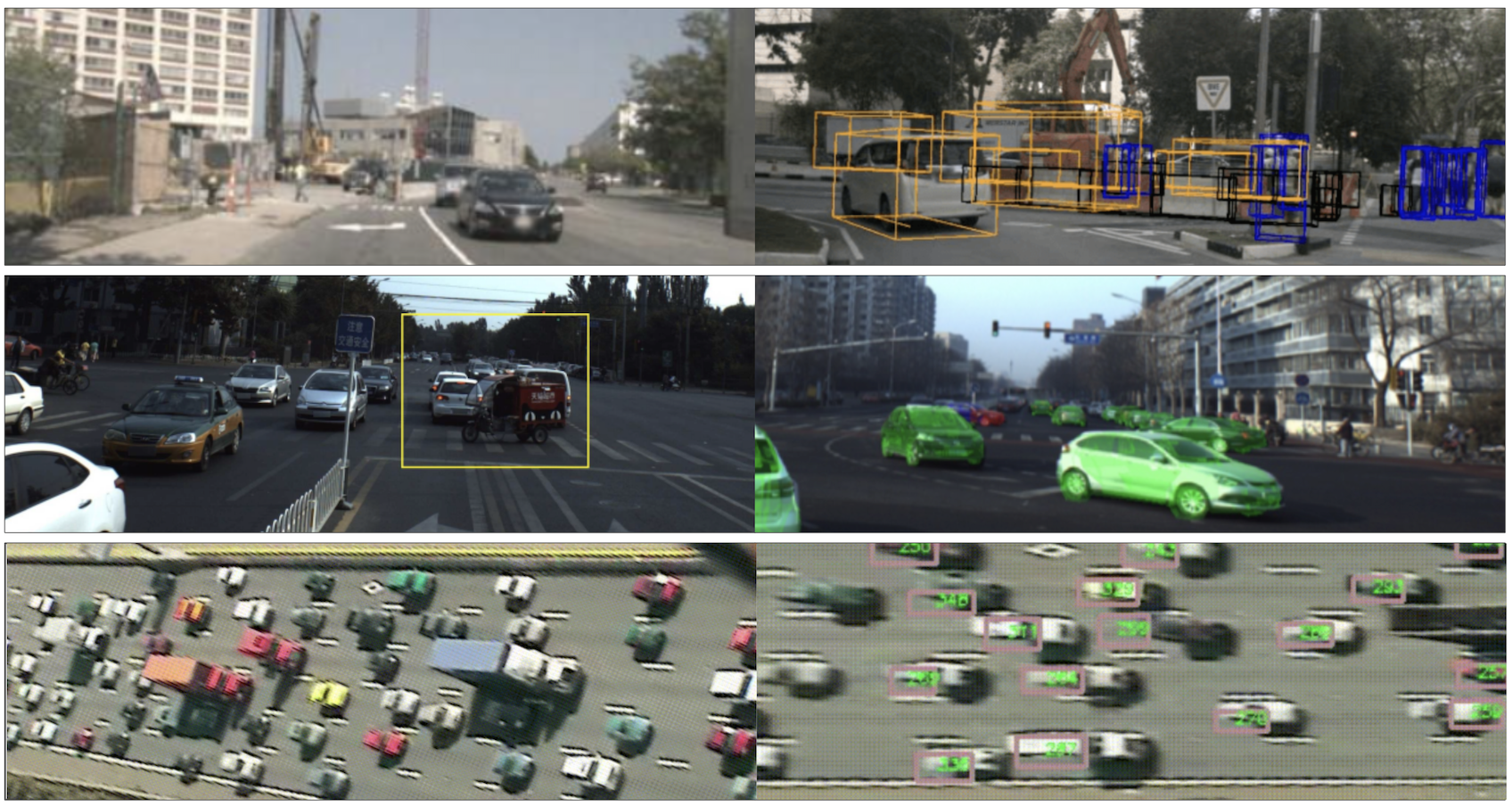}
    \caption{ { Benchmarking trajectory prediction across real-world driving datasets.} Examples from nuScenes (top), ApolloScape (middle), and NGSIM (bottom) illustrate the diversity in agent interactions and scene dynamics.
    }
    \label{fig:main}
    \vspace{-5pt}
    % \vspace{-2em}
\end{figure}

Quickest change detection (QCD) is a natural, theory-grounded fit, since it models how prediction errors evolve over time. Classical CUSUM gives minimax delay guarantees under false-alarm constraints~\cite{tartakovsky2014sequential,page1954continuous,basseville1993detection}, and recent work shows CUSUM works well for AD~\cite{guo2024building}. Prediction errors in AD are also \emph{multimodal}~\cite{guo2025dynamic}: a low-error mode for structured, predictable contexts, and a high-error mode for complex interactions. These modes switch latently as the driving context changes, creating temporal dependence that a unimodal distribution cannot capture. Hidden Markov Models (HMMs) cover a broad family of such dynamics through their transition probabilities and emission distributions. They abstract these latent modes well~\cite{mor2021systematic,fuh2018asymptotic} and explain real-world patterns across datasets (Section~\ref{sec:hmm_formulation}).

Building on this, we extend the data-driven cumulative Maximum Mean Discrepancy framework of~\cite{zhang2024data} to AD. Existing QCD methods for HMMs~\cite{fuh2015quickest} need exact pre- and post-change specifications, which are rarely available at deployment. We instead build the \emph{non-stationary (latent) Dynamic CuSum Maximum Mean Discrepancy (DC-MMD)} detector on the theory of~\cite{zhang2024data}. It uses no separate change-free reference sequence; the reference comes from second-order statistics of the ID data. Pairing consecutive errors $x_t=(e_{t-1},e_t)$ preserves the Markovian transition structure, so the detector still reacts to transition-regime shifts that a first-order reference would miss.

\noindent
We summarize our contributions in the following:

\ding{182} \textbf{Non-stationary (latent) dynamics-aware detection.}
Error regimes switch frequently between low and high (Fig.~\ref{fig:latent-dynamic}), so a single stationary model conflates benign mode switching with true OOD. We cast monitoring as QCD over an HMM-driven prediction-error process,
designed to absorb routine switching and react only to genuine shifts, with provable delay and false-alarm guarantees.

\ding{183} \textbf{Robustness under heavy tails.}
Detection stays stable from light- to heavy-tailed errors, even when tail assumptions are violated.

\ding{184} \textbf{Robustness to unknown post-change.} DC-MMD needs no prior knowledge of the post-change distribution, yet matches methods designed for that setting.

\ding{185} \textbf{Real-world validation.}
On nuScenes, NGSIM, and ApolloScape, DC-MMD consistently improves over sequential and UQ-based baselines at real-time cost.

\section{Related Work}

\subsection{Trajectory Prediction under Distributional Shift}
\label{subsec:traj_shift}

Modern trajectory predictors---from graph-based Social-LSTM~\cite{alahi2016social}, GRIP++~\cite{li2020gripplus}, and Trajectron++~\cite{salzmann2020trajectron++} to attention-based forecasters~\cite{giuliari2021transformer}---reach strong ID accuracy but stay brittle under shifts in traffic density, topology, and interaction patterns~\cite{filos2020can,hwang2024safeshift}. These shifts are often \emph{silent}: forecasts look spatially coherent yet are inaccurate, with no matching rise in reported uncertainty~\cite{yang2024generalized}. Common robustness tools---Deep Ensembles~\cite{lakshminarayanan2017simple}, MC-Dropout~\cite{gal2016dropout}, and energy scores~\cite{liu2020energy}---measure uncertainty at the \emph{single-frame} level, ignoring the \emph{sequential, non-stationary} structure of deployment-time errors. We close this gap by modeling that error process as an HMM and pairing a data-driven MMD statistic with a CUSUM rule, catching reliability loss as it unfolds rather than frame by frame.

\subsection{OOD Detection and Sequential Change Detection}
\label{subsec:ood_qcd}
Frame-level OOD detectors~\cite{hendrycks2017baseline,wiederer2023joint} accumulate no temporal evidence. Classical sequential tests like CUSUM~\cite{page1954continuous} assume stationary or known noise, which breaks down in urban driving~\cite{basseville1993detection}. Recent variants~\cite{ahad2024das} relax some assumptions but still need a pre-specified post-change family, and~\cite{guo2025dynamic} adds mode-aware thresholds yet leaves the realistic latent non-stationary case open. Deep OOD detectors---energy-based~\cite{liu2020energy}, density/likelihood-based~\cite{ren2019likelihood}, and feature-distance scores~\cite{lee2018simple}---and learned sequential anomaly detectors~\cite{chakraborty2023structural} are not directly comparable: they score single frames or need model introspection, retraining, or internal features, give no stopping rule, and offer no delay/false-alarm guarantee. DC-MMD instead runs on the scalar error stream $\{e_t\}$ alone, with provable WADD/MTFA control, allowing a fair matched-MTFA comparison against the sequential baselines we adopt.

\section{Problem Formulation}
\label{sec:problem}

\subsection{Trajectory Prediction}

Consider an AV in a dynamic traffic environment. Time is slotted into steps set by the sampling frequency. Let $S^t$ denote the scene at step $t \in \N$, comprising the trajectories of neighboring vehicles and the map over the most recent $H$ steps, i.e.,
\[
S_t = \left\{ \{ p_{t-H+1,i}, \cdots, p_{t,i}\}_{i=1}^{n_t},  ~ {\mathcal{M}_{t-H+1}, \cdots, \mathcal{M}_t} \right\},
\]
where $n_t$ is the number of neighboring vehicles at time step $t$,
$H$ is the observation window length,
$p_{r,i}$ is the trajectory position of vehicle $i$ at time step $r$, and $\mathcal{M}_r$ is the map for $r=t-H+1, \cdots, t$.

Let $f^\theta$ be a given trajectory predictor, and $\mathcal{D}$ the corresponding training dataset, which is a collection of traffic scenes.
The trajectory predictor maps an encountered traffic scene into predicted trajectories of neighboring vehicles, i.e.,
\[
f^{\theta}(S_t) = \{\hat{p}_{t+1,i}, \cdots, \hat{p}_{t+L,i}\}_{i=1}^{n_t},
\]
where $L\in \N$ is the prediction window.

\smallskip
\noindent\textbf{Prediction Error.}
We measure the performance of the given trajectory prediction $f^\theta$ through widely adopted metrics such as Average Displacement Error (ADE), Final Displacement Error (FDE), and Root Mean Squared Error (RMSE) \cite{guo2024building}.

Let $v_t$ denote a target vehicle, which can be arbitrarily chosen from the neighboring vehicles.
The ADE averages pointwise $\ell_2$ prediction errors:
$
\text{ADE}_t = \frac{1}{L}\sum_{\ell=1}^{L} \bigl\|\hat{p}_{t+\ell, v_t} - p_{t+\ell, v_t}\bigr\|_2.
\label{eq:error ADE}
$
Similarly, FDE measures the $\ell_2$ error in the final position:
$
\label{eq:error FDE}
\text{FDE}_t = \|\hat{p}_{t+L, v_t} - p_{t+L, v_t}\|_2.
$
RMSE is the square root of the averaged squared $\ell_2$ errors:
$
\label{eq:error RMSE}
\text{RMSE}_t = \sqrt{\frac{1}{L}\sum_{\ell=1}^{L} \bigl\|\hat{p}_{t+\ell, v_t} - p_{t+\ell, v_t}\bigr\|_2^2}.
$

\subsection{OOD Detection as Quickest Change-Point Detection}
\label{sec:sequentialQCD}
Prediction error can be observed online as true trajectories are revealed. Our framework monitors this error sequence rather than raw trajectories or internal states, making it lightweight and agnostic to the choice of predictor $f^{\theta}$.

Let $\{e_t\}_{t \geq 1}$ denote the sequence of prediction errors produced by the deployed predictor $f^\theta$, where $e_t$ may be instantiated by any of the above metrics (ADE, FDE, or RMSE).
An OOD event may begin to emerge at an unknown time step $\tau^* \in \mathbb{N} \cup \{\infty\}$; when $\tau^* = \infty$, no OOD event ever occurs. We refer to $\tau^*$ as the changepoint.
Formally,
\[
e_t \sim
\begin{cases}
f_t(\cdot), & t < \tau^*, \\
g_t(\cdot), & t \geq \tau^*,
\end{cases}
\label{eq:change_model}
\]
where $f_t$ denotes the ID error density, and $g_t$ denotes the OOD error density.
The non-stationarity of $f_t$ and $g_t$ arises from the inherent motion dynamics and environmental evolution that are unique to AVs (see Section~\ref{sec:hmm_formulation}).

\smallskip
\noindent\textbf{Evaluation.}
We adopt the minimax framework for QCD~\cite{lorden1971procedures} and evaluate detection performance through two competing objectives: detection delay and false alarm control.

We quantify detection delay via a form of \emph{worst-case average detection delay} (WADD) \cite{zhang2024data}:
\begin{equation}
\mathrm{WADD}(\tau)
:= \sup_{\tau^* \geq 1}
\mathbb{E}_{\tau^*} \!\left[ (\tau - \tau^*)^{+} \mid \mathcal{F}_{\tau^*-1} \right],
\label{eq:wadd}
\end{equation}
where $\mathcal{F}_{\tau^*-1}$ is the natural filtration that incorporates all randomness up to the end of time step $\tau^*-1$,
$\mathbb{E}_{\tau^*}[\cdot]$ denotes expectation under the measure $\mathbb{P}_{\tau^*}$,
and the supremum ranges over all possible realizations of the changepoint.
This metric helps us guard against the adversarial choice of the changepoint, which is essential for safety-critical applications.

To characterize detector stability under nominal conditions, we use the \emph{mean time to false alarm} (MTFA):
\begin{equation}
\mathrm{MTFA}(\tau) := \mathbb{E}_{\infty}[\tau],
\label{eq:mtfa}
\end{equation}
where $\mathbb{E}_{\infty}[\cdot]$ denotes expectation when no OOD event ever occurs.
Following Lorden's optimality~\cite{lorden1971procedures}, we seek a detection method that achieves short delay subject to a prescribed false alarm constraint:
\begin{equation}
\tau^{*} \in \operatorname*{arg\,min}_{\tau}\;\mathrm{WADD}(\tau)
\quad \text{subject to} \quad
\mathrm{MTFA}(\tau) \geq \gamma,
\label{eq:lorden_problem}
\end{equation}
where $\gamma > 0$ is the user-specified tolerable false alarm rate.
Setting a larger $\gamma$ often enforces stricter false alarm control at the cost of increased detection delay, whereas choosing a smaller $\gamma$ prioritizes rapid detection.

\section{Modeling the Non-Stationarity of Prediction Errors}
\label{sec:hmm_formulation}
\begin{table}[t!]
\vspace{0.5em}
\centering
\caption{
Latent error dynamics across driving datasets, modeled as HMMs with emission parameters and transition probability.
Emission parameters: $\Delta\mu$ (mean gap between modes) and $\bar{\sigma}^{2}$ (within-mode variance) characterize detection difficulty as {Easy} (large $\Delta\mu$, small $\bar{\sigma}^{2}$), {Moderate (Mod.)}, or {Hard} (small $\Delta\mu$, large $\bar{\sigma}^{2}$).
Transition parameter: $p$ (mode-switching probability) reflects how often the latent state changes.
Each marker denotes the dataset source:
% $\bullet$\,=\,NGSIM, $\blacktriangle$\,=\,nuScenes, $\star$\,=\,ApolloScape.
\srcNGSIM\,=\,NGSIM,\quad \srcNuScenes\,=\,nuScenes,\quad \srcApollo\,=\,ApolloScape.
}
\label{tab:signature}
\footnotesize
\setlength{\tabcolsep}{4pt}
\begin{tabular}{llccc}
\toprule
\multirow{2}{*}{\textbf{Scene}} & \multirow{2}{*}{\textbf{Src}}
  & \multicolumn{2}{c}{\textbf{Emission}} & \textbf{Transition} \\[-0.5ex]
\cmidrule(lr){3-4} \cmidrule(lr){5-5}
& & $\Delta\mu$ & $\bar{\sigma}^{2}$ & $p$ \\
\midrule
\rowcolor{babyblue}
\multicolumn{5}{l}{\textit{Highway --- structured, low-density}}
\\
Car-following (platoon)   & \srcNGSIM 
  & Mod. & Easy & 0.15 \\
Congested stop-and-go     & \srcNGSIM 
  & Mod. & Mod. & 0.32 \\
\midrule
\rowcolor{babyblue}
\multicolumn{5}{l}{\textit{Mixed Urban --- transitional, Mod. complexity}} \\
On-ramp merge             & \srcNuScenes\;\srcApollo 
  & Mod. & Mod. & 0.55 \\
Signalised intersection   & \srcNuScenes\;\srcApollo 
  & Mod. & Mod. & 0.50 \\
\midrule
\rowcolor{babyblue}
\multicolumn{5}{l}{\textit{Unstructured Urban --- unstructured, high-density}} \\
Unsignalised roundabout   & \srcNuScenes\;\srcApollo
  & Hard & Mod. & 0.87 \\
Unsignalised intersection & \srcApollo
  & Hard & Hard & 0.88 \\
Pedestrian-heavy zone     & \srcApollo
  & Hard & Hard & 0.91 \\
\bottomrule
\end{tabular}
\vspace{-5pt}
% \vspace{-2em}
\end{table}

Prediction errors are non-stationary even without OOD events. As~\cite{guo2025dynamic} observes, the error process alternates between two latent modes: a \emph{low-risk mode} that stays stable under routine flow, and a \emph{high-risk mode} that emerges during dense traffic, occlusions, or abrupt maneuvers. The high-risk mode is not an isolated spike; it persists over several steps as the scene evolves. HMMs capture such dynamics naturally: transition probabilities set how long each regime lasts, and emission distributions set the error magnitude within it. Empirically, these transitions show clear Markovian structure and fit HMMs well (Section~\ref{subsec:empirical_evidence}).

\begin{figure*}[t!]
\vspace{0.6em}
    \centering
    \includegraphics[width=1\linewidth]{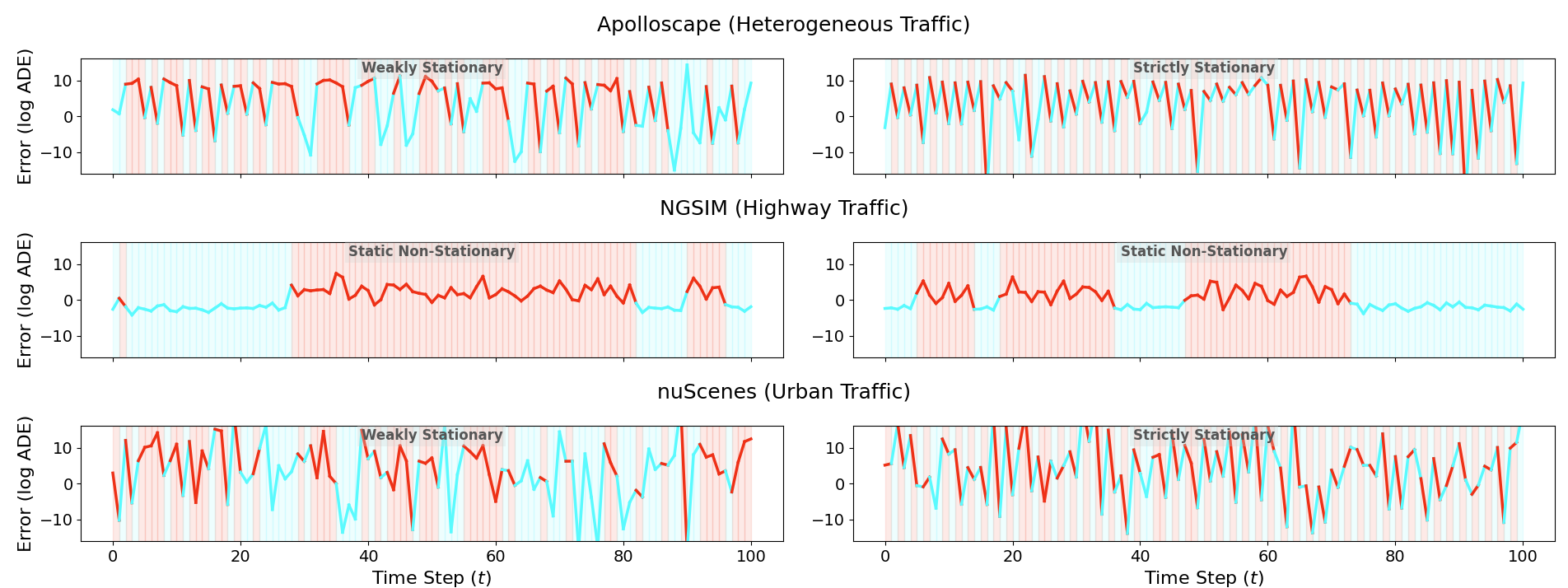}
    \caption{Non-stationary switching behavior of prediction errors across three driving environments: NGSIM (Highway), nuScenes (Mixed Urban), and ApolloScape (Unstructured Urban). The $x$-axis represents the time step ($t$) and the $y$-axis denotes the error (log ADE). Preliminary  results on other metrics (i.e., FDE and RMSE) show similar behaviors.
    Background shading indicates the inferred latent error mode: Low-Error Mode ({\color{myBlue} Blue}) and High-Error Mode ({\color{myRed}Red}). The two modes are separated via MAP estimation under a Gaussian mixture model posterior, which assigns each observation to the component with the highest posterior responsibility.}
    \label{fig:latent-dynamic}
    \vspace{-1em}
\end{figure*}

\subsection{Empirical Evidence for Dynamic-Switching Errors}
\label{subsec:empirical_evidence}

We analyze error dynamics on diverse real-world datasets:
NGSIM~\cite{fhwa2020} (highway),
nuScenes~\cite{caesar2020nuscenes} (mixed urban),
and ApolloScape~\cite{huang2018apolloscape}
(unstructured urban).
We fit Gaussian-emission HMMs with different
numbers of latent states.
A two-state Gaussian-emission HMM gives the best fit throughout, matching the bimodal finding of~\cite{guo2025dynamic}. This holds across all three datasets and both architectures---graph-based GRIP++~\cite{li2020gripplus} and attention-based FQA~\cite{kamra2020multi}---so the two modes reflect physically distinct regimes (routine low-error flow vs.\ complex high-error interactions), not a quirk of one dataset or predictor. The choice is deliberate: two states are the minimal structure that separates these regimes while keeping the reference $\lambda^{(0)}$ estimable from limited ID data.

Table~\ref{tab:signature} summarizes the fitted parameters in terms of
emission parameters ($\Delta\mu$, $\bar{\sigma}^2$) and transition probability ($p$). Our key findings are as follows:

\vspace{0.2em}
\noindent
\textbf{Highway scenes} (NGSIM) exhibit clearly separated low- and
high-risk modes with stable dynamics and relatively rare switching
($p\!\approx\!0.15$--$0.32$ across scene types, see Table~\ref{tab:signature}). Most time steps remain in the low-risk mode,
while high-risk episodes are typically triggered by abrupt maneuvers
such as cut-ins or braking waves.

\noindent
\textbf{Urban scenes} (nuScenes and ApolloScape) show more frequent
switching ($p= 0.5$--$0.9$), reflecting rapidly evolving agent-agent interactions and agent-lane interactions.

Fig.~\ref{fig:latent-dynamic} shows that latent dynamics differ even across scenes in one dataset; still, every dataset and predictor switches non-stationarily, only at different rates. These shared traits---latent modes with persistent but variable transitions---motivate a probabilistic abstraction, which we formalize next as an HMM over the error sequence.

\subsection{HMM Error Model}
\label{subsec:hmm_model}

To capture temporal dependence in prediction errors in the absence of OOD events, we model the error sequence
$\{e_t\}_{t\ge1}$ using a HMM.
The key idea is that prediction errors arise from multiple latent operating
conditions, which evolve over time.

Let $Z_t \in \{L,H\}$ denote the latent error mode at time $t$, where
state $L$ corresponds to a nominal low-error mode and state $H$ represents
an elevated-error mode. We model the random evolution $\{Z_t: t=1, ...\}$ as a Markov chain with
\[
P_{ij} = \Pr(Z_{t+1}=j \mid Z_t=i), ~~~ \forall  ~ i, j\in \{L, H\}.
\]
We assume $P_{ij}>0$ for $i, j\in \{L, H\}$.
Conditioned on the latent state, the observed error at each time step is drawn
independently from a state-specific emission distribution:
\[
e_t \mid Z_t = k \;\sim\; h_k, \quad k \in \{L, H\},
\]
where $h_L$ and $h_H$ denote the low-error and high-error emission densities,
respectively.

\subsection{Pre- and Post-Change Error Models}
\label{subsec:regime_change}
For generality, we allow the post-change distribution $g_t$ to follow HMM as well.
Let $\Theta^{(0)}=(P^{(0)},h_L^{(0)},h_H^{(0)})$ and  $\Theta^{(1)}=(P^{(1)},h_L^{(1)},h_H^{(1)})$
denote the pre-and post-change HMM parameters, respectively.
This formulation captures two types of distribution shifts:

\textit{Emission shift:}
$h_k^{(0)} \neq h_k^{(1)}$ for some $k \in \{L,H\}$.

\textit{Transition shift:}
$P^{(0)} \neq P^{(1)}$, reflecting altered switching
behavior between error modes.

In our experiments (Sections~\ref{sec:experiments} and~\ref{sec:results}), $\Theta^{(0)}$ is learned from the predictor's training data, with $P_{ij}^{(0)}>0$ for every entry. We assume no prior knowledge of $\Theta^{(1)}$, only $P_{ij}^{(1)}>0$ for $i,j \in \{L, H\}$. Rather than constrain each shift type separately, we require the joint distributions $\{f_t\}$ and $\{g_t\}$ to be distinguishable, i.e.,
\[
D_{KL} (f_1, f_2, ...|| g_1, g_2, ...) >0, ~ D_{KL} (g_1, g_2, ...||f_1, f_2, ...) >0,
\]
where $D_{KL}(\cdot||\cdot)$ is the Kullback--Leibler (KL) divergence.

\section{Method: Kernel-based Detection}
\label{sec:method}
To capture the information in the mode dynamics across time steps, following \cite{zhang2024data}, we monitor second-order prediction errors $x_t := (e_{t-1}, e_t) \in \mathbb{R}^2.$
Since each entry of $P^{(0)}$ is positive, the limiting distribution of $x_t$ under $\Theta^{(0)}$ exists and is unique.
We denote the distribution as
$\lambda^{(0)}$, estimated from training data.
Similarly, the limiting distribution of $x_t$ under $\Theta^{(1)}$, denoted by $\lambda^{(1)}$, exists and is unique.

\vspace{-0.1em}
In real AD systems, uncertainty limits what we know about the post-change distribution, while delay and false-alarm guarantees are still required. We adapt~\cite{zhang2024data} to meet both needs, as detailed in Algorithm~\ref{alg:hmm_kernel_cusum}. The original method uses a companion change-free reference sequence; since none exists in our setting, we build the reference from second-order statistics of ID data.

DC-MMD works as follows. The error sequence is split into non-overlapping blocks of length $m$ (a tuning parameter). For each block, we measure the deviation between its empirical second-order distribution and the reference $\lambda^{(0)}$, and declare an OOD event once the cumulative deviation passes a threshold $b$. The offset $\zeta$ tunes sensitivity to observation noise.

\begin{algorithm}[ht]
\caption{Non-stationary (Latent) Dynamic CuSum Maximum Mean Discrepancy (DC-MMD)}
\label{alg:hmm_kernel_cusum}
\begin{algorithmic}[1]
\STATE \textbf{Input:} second-order error distribution of ID data $\lambda^{(0)}$, block length $m$, offset $\zeta$, threshold $b$
\STATE \textbf{Initialize:} $W_0 \gets 0$, $\mathcal{B}_0\gets \mathcal{O}$
\FOR{$t=1, \cdots$}
\STATE observe a prediction error $e_t$
\STATE construct the second-order sample $x_t=(e_{t-1}, e_t)$
\STATE $\mathcal{B}_t \gets \mathcal{B}_{t-1}\cup \{x_t\}$
\IF{$|\mathcal{B}_t| \mod m = 0$}
    \STATE construct the empirical distribution $\widehat{\mu}_{\lfloor t/m \rfloor}$
    \STATE compute $D(\widehat{\mu}_{\lfloor t/m \rfloor}, \lambda^{(0)})$
    \STATE $W_{\lfloor t/m \rfloor} \gets \max\{0,\,W_{\lfloor t/m \rfloor -1} + D(\widehat{\mu}_{\lfloor t/m \rfloor}, \lambda^{(0)}) -\zeta\}$
    \IF{$W_{\lfloor t/m \rfloor}>b$}
        \STATE declare an OOD event and {\bf break}
    \ENDIF
    \STATE $\mathcal{B}_{t} \gets \mathcal{O}$
\ENDIF
\ENDFOR
\end{algorithmic}
\end{algorithm}

% \begin{algorithm}[t]
% \small
% \setlength{\abovedisplayskip}{2pt}
% \caption{Dynamic CuSum MMD (DC-MMD)}
% \label{alg:hmm_kernel_cusum}
% \begin{algorithmic}[1]
% \setlength{\itemsep}{-1pt}
% \STATE \textbf{Input:} ID second-order error distribution $\lambda^{(0)}$, block length $m$, offset $\zeta$, threshold $b$
% \STATE \textbf{Initialize:} $W_0 \gets 0$, $\mathcal{B}_0\gets \emptyset$
% \FOR{$t=1, \cdots$}
% \STATE observe prediction error $e_t$; set $x_t=(e_{t-1}, e_t)$
% \STATE $\mathcal{B}_t \gets \mathcal{B}_{t-1}\cup \{x_t\}$
% \IF{$|\mathcal{B}_t| \bmod m = 0$}
%     \STATE construct $\widehat{\mu}_\bk$ and compute $D(\widehat{\mu}_\bk, \lambda^{(0)})$
%     \STATE $W_\bk \gets \max\{0,\,W_{\bk-1} + D(\widehat{\mu}_\bk, \lambda^{(0)}) -\zeta\}$
%     \IF{$W_\bk>b$}
%         \STATE declare an OOD event and {\bf break}
%     \ENDIF
%     \STATE $\mathcal{B}_{t} \gets \emptyset$
% \ENDIF
% \ENDFOR
% \end{algorithmic}
% \end{algorithm}

Formally, let $\mathcal{B}_t = \{e_{(t-1)m +1}, \cdots, e_{tm}\}$ denote the $t$-th block of the partitioned errors, which gives $m-1$ consecutive second-order samples, i.e., $x_t^1, \cdots, x_t^{m-1}$. Let $\hat{\mu}_t$ denote the empirical second-order distribution over the $t$-th block.

To compare $\hat{\mu}_t$ and $\lambda^{(0)}$, we use MMD in a Reproducing Kernel Hilbert Space (RKHS).
Let $k(\cdot,\cdot)$ be a bounded kernel with associated RKHS $\mathcal{H}$.
The MMD between $\hat{\mu}_t$ and $\lambda^{(0)}$, denoted by $D(\hat{\mu}_t, \lambda^{(0)})$, can be computed as
\begin{align*}
D^2(\hat{\mu}_t, \lambda^{(0)})
&= \mathbb{E}_{X, \bar{X}\sim \hat{\mu}_t} [k(X, \bar{X})] + \mathbb{E}_{Y, \bar{Y}\sim \lambda^{(0)}} [k(Y, \bar{Y})] \\
& \quad - 2  \mathbb{E}_{X\sim \hat{\mu}_t, Y\sim \lambda^{(0)}}[k(X,Y)],
\end{align*}
where $X$, $\bar{X}$ are independent copies that follow $\hat{\mu}_t$, and $Y$, $\bar{Y}$ are independent copies that follow $\lambda^{(0)}$.

\begin{table*}[t!]
\vspace{0.6em}
\centering
\caption{{OOD detection performance across datasets.} Higher AUROC ($\uparrow$) and lower FPR@95 ($\downarrow$) indicate better performance.}
\label{tab:rq1-performance}
\begin{tabular}{ll ccc ccc}
\toprule
\textbf{Category} & \textbf{Method} & \multicolumn{3}{c}{\textbf{AUROC} $\uparrow$} & \multicolumn{3}{c}{\textbf{FPR@95} $\downarrow$} \\
\cmidrule(lr){3-5} \cmidrule(lr){6-8}
& & \textbf{NGSIM} & \textbf{nuScenes} & \textbf{ApolloScape} & \textbf{NGSIM} & \textbf{nuScenes} & \textbf{ApolloScape} \\
& & (Highway) & (Mixed urban) & (Unstructured Urban) & (Highway) & (Mixed urban) & (Unstructured Urban) \\ \midrule
Point-wise & NLL \cite{lee2018simple} & 0.71 & 0.64 & 0.58 & 0.62 & 0.71 & 0.78 \\
(UQ-based) & lGMM \cite{wiederer2023joint} & 0.74 & 0.69 & 0.62 & 0.57 & 0.65 & 0.73 \\ \midrule
Sequential & G-CUSUM \cite{page1954continuous} & 0.82 & 0.76 & 0.69 & 0.44 & 0.52 & 0.61 \\
& GMM-CUSUM \cite{guo2024building} & 0.85 & 0.79 & 0.73 & 0.41 & 0.48 & 0.56 \\
& Mode-CUSUM \cite{guo2025dynamic} & 0.88 & 0.82 & 0.77 & 0.36 & 0.43 & 0.51 \\
\rowcolor{babyblue} & \textbf{DC-MMD (ours)} & \textbf{0.93} & \textbf{0.89} & \textbf{0.84} & \textbf{0.24} & \textbf{0.31} & \textbf{0.39} \\ \bottomrule
\end{tabular}
\vspace{-2em}
\end{table*}

An occurrence of an OOD event is declared when
\begin{align}
\label{eq: stopping time}
\tau(b) = \inf \left \{ mt: ~ \max_{1\le i\le t} \sum_{j=i}^t (D(\hat{\mu}_j, \lambda^{(0)}) - \zeta) >b \right \},
\end{align}
where $0 < \zeta < D(\lambda^{(1)}, \lambda^{(0)})$, and the block statistic at block index $k=\lfloor t/m \rfloor$, $W_k:=\max_{1\le i\le k} \sum_{j=i}^k (D(\hat{\mu}_j, \lambda^{(0)}) - \zeta)$, can be computed in an iterative fashion:
\[
W_k = \max \{0,\,W_{k-1}+(D(\hat{\mu}_k, \lambda^{(0)}) - \zeta)\}.
\]
Using MMD in an RKHS ensures the boundedness of the associated random quantities, which in turn enables the application of tools from random process theory (such as Doob's martingale inequality) to derive concentration results.

\begin{remark}[Optional Variance-Normalization]
\label{subsec:adaptive_threshold}
The finite-sample variance of $D (\hat{\mu}_t, \lambda^{(0)})$ may vary substantially across datasets and across different on-board trajectory predictors.
For practical implementation and easier selection of the detection threshold $b$, one may optionally normalize the block increment as
$\frac{D (\hat{\mu}_t, \lambda^{(0)}) -\zeta}{\sqrt{\widehat{\sigma}_t+\varepsilon}}$,
where $\widehat{\sigma}_t$ is estimated from a rolling window of recent blocks, and $\varepsilon>0$ is a small constant introduced to ensure numerical stability.
\end{remark}

Let $\Delta^{(0)} \in [0,1)$ denote the spectral gap of the
transition matrix associated with the second-order Markov chain
under $\Theta^{(0)}$, and let
$R^{(0)} := \sup_{x} \mathbb{E}[k(x, X')^2]$, where $X' \sim \lambda^{(0)}$,
be the second-moment envelope of the kernel.
Both $\Delta^{(0)}$ and $R^{(0)}$ enter the concentration bound for the
blockwise MMD estimator and appear in the detection-delay
guarantee (Theorem~\ref{thm: detection delay}).

\begin{theorem}[Detection delay]
\label{thm: detection delay}
For any given $b>0$, $m\in \mathbb{N}$, and properly chosen $\zeta$ such that $\zeta < D(\lambda^{(1)}, \lambda^{(0)})$,
the WADD of Algorithm \ref{alg:hmm_kernel_cusum} satisfies
\begin{align*}
\text{WADD}(\tau(b)) \le \frac{mb}{(\sqrt{d}-\sqrt{a})^2} + \frac{m \sqrt{d}}{\sqrt{d}-\sqrt{a}},
\end{align*}
where $a=\sqrt{\frac{2-2\Delta^{(0)} + 4 R^{(0)}}{(m-1)(1-\Delta^{(0)})}}$, and $d=D(\lambda^{(1)}, \lambda^{(0)}) -\zeta$.
\end{theorem}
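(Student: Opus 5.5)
We reduce the detection delay to the hitting time of a random walk with positive drift, evaluated at the block level. The bound has the shape $m\,b/(\sqrt d-\sqrt a)^2 + m\sqrt d/(\sqrt d-\sqrt a)$. This suggests three ingredients: (i) a per-block lower bound on the expected increment, $\mathbb{E}[D(\hat\mu_j,\lambda^{(0)})]-\zeta \gtrsim (\sqrt d-\sqrt a)^2$, obtained from a concentration bound for the blockwise MMD estimator; (ii) Wald's identity or optional stopping to turn a drift of that size into an expected crossing time of order $b/\text{drift}$; and (iii) an overshoot term, which yields the additive $\sqrt d/(\sqrt d-\sqrt a)$ blocks. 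The factor $m$ converts blocks to time steps.

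\textbf{Step 1 (worst-case changepoint reduction).} Since $W_k\ge 0$ and the recursion $W_k=\max\{0,W_{k-1}+D(\hat\mu_k,\lambda^{(0)})-\zeta\}$ is monotone in its initial value, the conditional delay given $\mathcal{F}_{\tau^*-1}$ is maximized when $W=0$ at the first full post-change block. The block straddling $\tau^*$ costs at most one extra block, which we absorb into the constant term; the post-change chain started from any state is handled in Step 2. It therefore suffices to bound $\mathbb{E}[N]$, where $N=\inf\{n:\sum_{j=1}^n (D_j-\zeta)>b\}$ and $D_j$ are the post-change block MMDs.

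\textbf{Step 2 (concentration of the block MMD).} By the triangle inequality for the RKHS embedding, $D(\hat\mu_j,\lambda^{(0)})\ge D(\lambda^{(1)},\lambda^{(0)})-D(\hat\mu_j,\lambda^{(1)})$. We then bound $\mathbb{E}[D^2(\hat\mu_j,\lambda^{(1)})]=\|\frac1{m-1}\sum_i \phi(x^i_j)-\mu_{\lambda^{(1)}}\|_{\mathcal H}^2$ by expanding the square into diagonal and cross terms. The diagonal terms contribute $O(1/(m-1))$ through the kernel second moment, giving the $R$-type term. The cross terms decay geometrically in the lag, with rate governed by the spectral gap of the second-order chain, giving $\sum_\ell (1-\Delta)^\ell\to 1/(1-\Delta)$-type factors. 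Together these produce $\mathbb{E}D^2\le a^2$ (up to the stated form), and Jensen's inequality gives $\mathbb{E}D(\hat\mu_j,\lambda^{(1)})\le\sqrt{a}$ in the normalization used. The main obstacle is here. The theorem writes the constants with the pre-change quantities $\Delta^{(0)},R^{(0)}$, while the samples are post-change. I would follow \cite{zhang2024data}: bound the deviation by comparing the block empirical embedding against the reference, and use uniform boundedness of $k$ together with $P^{(1)}_{ij}>0$ (which gives mixing) to justify the stated constants, stating explicitly any uniformity assumption needed. A non-stationary start within the block is handled by the same geometric mixing, at the cost of a lower-order term.

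\textbf{Step 3 (drift and stopping).} Step 2 gives a conditional drift: for each block, $\mathbb{E}[D_j-\zeta\mid\mathcal{F}_{\text{prev}}]\ge d-\sqrt{a\,d}\cdot(\text{cross term})$. To obtain exactly $(\sqrt d-\sqrt a)^2$, I would work with square roots, writing $D_j \ge (\sqrt{d+\zeta}-\text{err})$, and complete the square. This yields $\mathbb{E}[D_j-\zeta]\ge(\sqrt d-\sqrt a)^2=:\delta>0$ whenever $d>a$. Then $S_n-n\delta$ is a submartingale with bounded increments, since $D\le 2\sqrt{\sup k}$. Optional stopping at $N\wedge n$ followed by monotone convergence gives $\delta\,\mathbb{E}N\le \mathbb{E}S_N\le b+\text{overshoot}$. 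The overshoot is at most one increment; bounding it by $\sqrt d(\sqrt d-\sqrt a)$ in expectation gives $\mathbb{E}N\le b/\delta+\sqrt d/(\sqrt d-\sqrt a)$. Multiplying by $m$ gives the claim. I expect the exact matching of the overshoot constant to require the same square-root bookkeeping as in \cite{zhang2024data}, so I would mirror that argument rather than deriving it afresh.
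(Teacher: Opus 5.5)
\paragraph{Verdict.} This is a genuine gap: Step~3 does not close as written, and it is also where your route departs from the paper's.

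\paragraph{What the paper does.} The paper's proof is a one-line reduction to \cite[Theorem~1]{zhang2024data}, with the companion-sequence term $a_P$ set to zero. The inherited constants decode to a threshold-and-count argument, not drift plus overshoot:
\begin{itemize}
\item With $x=\sqrt{ad}$, Markov's inequality gives $\Pr(D(\hat\mu_j,\lambda^{(1)})>x)\le a/x=\sqrt{a/d}$. So a block is ``good'' with probability at least $(\sqrt d-\sqrt a)/\sqrt d$.
\item On a good block the increment is at least $d-x=\sqrt d(\sqrt d-\sqrt a)$.
\item Collecting $b/(d-x)+1$ good blocks, at an expected $\sqrt d/(\sqrt d-\sqrt a)$ blocks apiece, reproduces both terms exactly.
\end{itemize}
The second term is therefore a ceiling-rounding term. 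That is why the paper's remark says it vanishes when an intermediate quantity is an integer; it is not an overshoot.

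\paragraph{The gap in Step~3.} You pair the pessimistic drift $(\sqrt d-\sqrt a)^2$ with an expected overshoot of $\sqrt d(\sqrt d-\sqrt a)$, and the second bound is simply asserted. The overshoot is at most one increment, but an increment is bounded only by $\sup D-\zeta$. Right after the change, a typical increment is already about $d>\sqrt d(\sqrt d-\sqrt a)$. If the increments are nearly constant near $d$ and $b$ is a multiple of that constant, the overshoot is about $d$. Then $\mathbb{E}S_N\le b+\sqrt d(\sqrt d-\sqrt a)$ fails.

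\paragraph{A slip in Steps~2--3.} The stated $a$ has exactly the form of the second-moment bound you sketch: $a^2=\tfrac{2}{m-1}+\tfrac{4R^{(0)}}{(m-1)(1-\Delta^{(0)})}$. So Jensen gives $\mathbb{E}D(\hat\mu_j,\lambda^{(1)})\le a$, not $\sqrt a$. The drift is then at least $d-a$, which already exceeds $(\sqrt d-\sqrt a)^2$ when $a<d$. The ``complete the square with $\sqrt{d+\zeta}$'' step is therefore unnecessary. It is also mis-scaled, because $D$ is the unsquared MMD.

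\paragraph{How to rescue your route.} Control the overshoot by truncation rather than by fiat; this even gives a slightly sharper bound.
\begin{itemize}
\item Set $X_j':=\min\{D(\hat\mu_j,\lambda^{(0)})-\zeta,\;d\}$. Since $D(\hat\mu_j,\lambda^{(0)})-\zeta\ge d-D(\hat\mu_j,\lambda^{(1)})$ and the right-hand side never exceeds $d$, you still have $X_j'\ge d-D(\hat\mu_j,\lambda^{(1)})$. So the conditional drift stays at least $d-a$.
\item The truncated recursion lies pathwise below $W_k$, so it alarms no earlier, and its overshoot is at most $d$.
\item Optional stopping then gives $\mathbb{E}N\le(b+d)/(d-a)$.
\end{itemize}
This implies the theorem, since $1/(d-a)\le 1/(\sqrt d-\sqrt a)^2$ and
$d/(d-a)=\tfrac{\sqrt d}{\sqrt d-\sqrt a}\cdot\tfrac{\sqrt d}{\sqrt d+\sqrt a}\le\tfrac{\sqrt d}{\sqrt d-\sqrt a}$.

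\paragraph{Two remaining bookkeeping points.}
\begin{itemize}
\item The partially post-change block cannot be ``absorbed into the constant.'' Your Step~3 already spends that whole term on the overshoot. After the repair, the $b$-independent slack is only $\sqrt{ad}/(d-a)$ blocks, which can be less than one. You need either block-aligned changepoints or an extra additive $m$.
\item Your flag on $\Delta^{(0)},R^{(0)}$ is well taken. The conditional drift needs concentration for post-change blocks started from an arbitrary latent state, which naturally yields $\Theta^{(1)}$ constants. Matching the stated pre-change constants therefore requires an explicit uniformity assumption, and the paper's one-line reduction is silent on this.
\end{itemize}
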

\begin{proof}
Theorem~\ref{thm: detection delay} follows directly from the analysis of \cite[Theorem 1]{zhang2024data} by setting $a_P$, as defined therein, to zero. Since our reference $\lambda^{(0)}$ is estimated from in-distribution data rather than a companion change-free sequence, there is no independent second stream contributing variance; the term $a_P$ in~\cite{zhang2024data} captures the cross-stream covariance of that companion sequence, which vanishes identically under our construction.

\begin{figure*}[t!]
\vspace{0.6em}
    \centering
    \includegraphics[width=\linewidth]{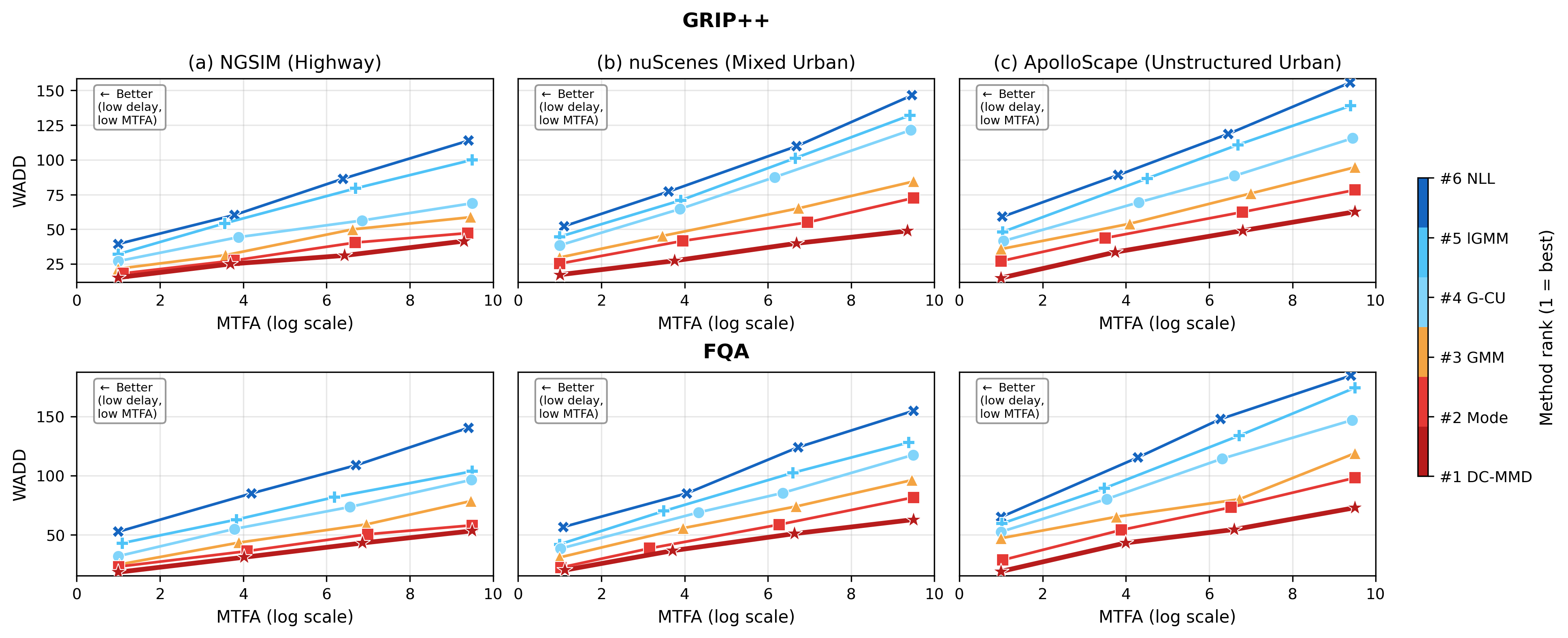}
    \caption{WADD--MTFA trade-off curves across driving domains. Each curve traces one detection method under two models: GRIP++ (top) and FQA (bottom) prediction. WADD down = faster detection; MTFA up = fewer false alarms.}
    \label{fig:rq1-frontiers}
\vspace{-5pt}
\end{figure*}

\end{proof}
\begin{remark}[Tight-bound specialisation]
When $b$ is restricted to a collection of values such that a certain intermediate quantity in the analysis of \cite[Theorem 1]{zhang2024data} is an integer, the second term in Theorem~\ref{thm: detection delay} vanishes and the bound simplifies to $\text{WADD}(\tau(b)) \le \frac{mb}{(\sqrt{d}-\sqrt{a})^2}$.
\end{remark}

\begin{theorem}[False alarm]\cite{zhang2024data}
\label{thm: false alarm}
For any given $b$, $m$, and properly chosen $\zeta$ such that $\zeta < D(\lambda^{(1)}, \lambda^{(0)})$,
the MTFA of Algorithm \ref{alg:hmm_kernel_cusum} grows exponentially in $b$, formally
\[
\mathrm{MTFA}(\tau (b)) \ge m \exp(qb),
\]
where $q>0$ is a constant that depends on $a$ and $\zeta$ but is independent of $b$.
\end{theorem}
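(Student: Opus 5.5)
The plan follows the standard CUSUM false-alarm argument, carried out at the block level and adapted to bounded, weakly dependent increments. Under $\mathbb{P}_\infty$ every block comes from the pre-change HMM $\Theta^{(0)}$. Write the increments as $Y_j := D(\hat{\mu}_j,\lambda^{(0)})-\zeta$.

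\textbf{Step 1: negative drift.} The same concentration bound behind Theorem~\ref{thm: detection delay} gives $\mathbb{E}_\infty[D(\hat{\mu}_j,\lambda^{(0)})]\le\sqrt{\mathbb{E}_\infty[D^2(\hat{\mu}_j,\lambda^{(0)})]}\le \sqrt{a}$. This bound controls the variance of the blockwise empirical kernel mean embedding through the spectral gap $\Delta^{(0)}$ and the envelope $R^{(0)}$. Hence $\mathbb{E}_\infty[Y_j]\le \sqrt{a}-\zeta<0$, provided $\zeta$ is ``properly chosen'' so that $\zeta>\sqrt{a}$.

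\textbf{Step 2: exponential supermartingale.} The increments are bounded: $0\le D\le 2\sqrt{K}$ with $K=\sup k$, since the kernel is bounded. This lets me pick $q>0$ such that $\mathbb{E}_\infty[e^{qY_j}\mid\mathcal{F}_{m(j-1)}]\le 1$ for all $j$. The natural candidate comes from Hoeffding's lemma applied to a variable with mean at most $\sqrt{a}-\zeta$ and range $2\sqrt{K}$, which gives
\[
q=\frac{2(\zeta-\sqrt{a})}{K'}
\]
for an appropriate range constant $K'$. Under this choice $M_t=\exp(q\sum_{j\le t}Y_j)$ is a nonnegative supermartingale, and $q$ depends only on $a$, $\zeta$ (and the kernel bound), not on $b$.

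\textbf{Step 3: conditional versus marginal mean.} Consecutive blocks are dependent, because the hidden chain carries over across block boundaries and $x$ pairs straddle them. So the bound on the conditional mean in Step 2 needs more than Step 1's marginal bound. I would handle this by conditioning on the latent state at the block boundary. Since $P^{(0)}_{ij}>0$, the spectral-gap mixing bound holds uniformly over the initial state, so the concentration estimate, and hence $\mathbb{E}[D\mid Z_{m(j-1)}]\le\sqrt{a}$, holds conditionally. This is the main obstacle. If uniformity fails, the fallback is to inflate $a$ by a burn-in constant depending on $\Delta^{(0)}$.

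\textbf{Step 4: from the supermartingale to the MTFA.} The CUSUM $W_t$ is the maximum over restart points $i$ of the partial sums $\sum_{j=i}^t Y_j$. For each restart $i$, Doob's maximal inequality applied to $M_t/M_{i-1}$ gives
\[
\mathbb{P}_\infty\Bigl(\sup_{t\ge i}\sum_{j=i}^t Y_j>b\Bigr)\le e^{-qb}.
\]
I would then use the classical Lorden argument: the stopping time of the CUSUM is bounded below in expectation by $1/\mathbb{P}(\text{a one-sided SPRT started afresh ever crosses } b)$, or, equivalently, I would union-bound over restart points within a horizon $n$, which gives $\mathbb{P}_\infty(\tau\le mn)\le n e^{-qb}$. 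Either route yields $\mathbb{E}_\infty[\tau/m]\ge e^{qb}$ up to constants absorbed into $q$. Multiplying by the block length $m$ gives $\mathrm{MTFA}(\tau(b))\ge m\exp(qb)$.
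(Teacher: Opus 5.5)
Your argument is sound. It necessarily takes a different route from the paper, which gives no proof of Theorem~\ref{thm: false alarm} and simply imports it from \cite{zhang2024data}. The citation buys brevity. Your reconstruction (negative pre-change drift, then a conditional exponential supermartingale, then Ville's inequality at each restart, then Lorden) is self-contained and makes explicit two things the citation leaves implicit.

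First, it exposes the real hypothesis behind ``properly chosen'': $\zeta$ must exceed a bound on $\mathbb{E}_\infty[D(\hat{\mu}_j,\lambda^{(0)})\mid\mathcal{F}_{m(j-1)}]$, which is your $\zeta>\sqrt{a}$. The stated condition $\zeta<D(\lambda^{(1)},\lambda^{(0)})$ cannot suffice on its own. For $\zeta$ below the strictly positive pre-change mean of $D$, the CUSUM drifts upward under $\mathbb{P}_\infty$ and the MTFA grows only linearly in $b$.

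Second, it isolates the single place where HMM dependence enters (your Step~3), and your resolution is the right one. Given the hidden state at the block boundary, the next block is independent of the past. Since $P^{(0)}_{ij}>0$, the chain mixes uniformly (Doeblin), so an $O(1/m)$ bound holds uniformly over the starting state. Expect it to carry a larger constant than the stationary-start bound, though. Your ``inflate $a$'' fallback is therefore what you will actually use, and both the admissible $\zeta$ and $q$ shift accordingly.

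The price of your route is sharpness. Hoeffding's lemma over the full range $[0,2\sqrt{K}]$ gives $q=2(\zeta-\sqrt{a})/K$, which does not improve with $m$. A bounded-differences (McDiarmid-type, Markov-chain) bound on the block statistic, where each $e_t$ moves $D$ by only $O(\sqrt{K}/m)$, would give a sub-Gaussian proxy of order $1/m$ and an exponent growing linearly in $m$. The statement only asks for some $q>0$ independent of $b$, so this costs sharpness, not validity.

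In Step~4, commit to the Lorden route and run it with conditional probabilities rather than the i.i.d.\ renewal picture. Let $N_i$ be the one-sided test started at block $i$. Ville's inequality for $M_t/M_{i-1}$ gives $\mathbb{P}_\infty(N_i<\infty\mid\mathcal{F}_{m(i-1)})\le e^{-qb}$, and $\{\tau/m\ge i\}\in\mathcal{F}_{m(i-1)}$. Hence, assuming $\mathbb{P}_\infty(\tau<\infty)=1$ (otherwise there is nothing to prove),
\[
1\le\sum_{i\ge 1}\mathbb{P}_\infty\bigl(\tau/m\ge i,\;N_i<\infty\bigr)\le e^{-qb}\,\mathbb{E}_\infty[\tau/m],
\]
which is exactly $\mathrm{MTFA}(\tau(b))\ge m e^{qb}$. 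The union-bound alternative yields only about $\tfrac{m}{2}e^{qb}$, and that factor cannot be absorbed into $q$ uniformly over small $b$.
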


\begin{remark}[Order-optimality]
As noted in \cite{zhang2024data}, exponential growth of $\text{MTFA}(\tau(b))$ is desirable.
This is because, to satisfy a given false-alarm constraint $\text{MTFA}(\tau(b))\ge \gamma$ for some $\gamma>0$, one may choose $b = \frac{\log \gamma - \log m}{q}$, assuming that $\gamma$ is a constant that does not depend on other problem parameters.
By Theorem \ref{thm: detection delay}, we know that Algorithm \ref{alg:hmm_kernel_cusum} achieves a detection delay of $O(\frac{\log \gamma - \log m}{q})$, which matches the order-wise lower bound $O(\log \gamma)$ for a broad region of $\gamma$. Hence, Algorithm \ref{alg:hmm_kernel_cusum} achieves order optimality.

\end{remark}

\section{Experimental Setup}
\label{sec:experiments}
\noindent\textbf{Datasets.}
We evaluate DC-MMD
across three complementary real-world driving domains:
NGSIM~\cite{fhwa2020} (structured highway, US-101 and I-80),
nuScenes~\cite{caesar2020nuscenes} (mixed urban traffic, Boston and Singapore),
and ApolloScape~\cite{huang2018apolloscape} (unstructured dense urban, Beijing).

\smallskip
\noindent\textbf{Trajectory prediction models.}
We evaluate two representative trajectory predictors:
{GRIP++}~\cite{li2020gripplus} (graph-based relational modeling)
and {Fuzzy Query Attention (FQA)}~\cite{kamra2020multi}
(attention-based interaction modeling).
The two architectures belong to distinct paradigm families. Consistent DC-MMD performance across both architecturally distinct paradigms confirms model-agnostic runtime monitoring.

\smallskip
\noindent\textbf{Baselines.}
All methods operate on the identical scalar prediction error sequence
$\{e_t\}$ without model introspection or repeated forward
passes. We compare against classic UQ-based detection
(NLL~\cite{lee2018simple}, lGMM~\cite{wiederer2023joint}), and sequential change detectors:
Classical G-CUSUM~\cite{page1954continuous},
GMM-CUSUM~\cite{guo2024building}, and
Mode-Aware CUSUM~\cite{guo2025dynamic}.
The adopted baselines all operate on the scalar sequence $\{e_t\}$ with formal WADD/MTFA guarantees, enabling a fair comparison (see Sec.~\ref{subsec:ood_qcd}).

\begin{table}[t!]
\centering
\caption{Impact of subtle OOD generation on prediction accuracy.
Dev.\ denotes the average spatial shifts
(${\leq}\,1$\,m). $\Delta$ (\%) denotes the error (ADE) increase from ID to OOD setting.}
\label{tab:ood-impact}
\footnotesize
\setlength{\tabcolsep}{5pt}
\begin{tabular}{@{}llccc@{}}
\toprule
\textbf{Dataset} & \textbf{Model}
  & \textbf{Dev.\ (m)} & \textbf{ADE (ID\,/\,OOD)}
  & $\boldsymbol{\Delta}$ \textbf{(\%)} \\
\midrule
\multirow{2}{*}{NGSIM}
  & GRIP++ & 0.46 & 0.73\,/\,1.82 & +149 \\
  & FQA    & 0.44 & 0.81\,/\,2.05 & +153 \\
\midrule
\multirow{2}{*}{nuScenes}
  & GRIP++ & 0.52 & 1.12\,/\,2.71 & +142 \\
  & FQA    & 0.49 & 1.24\,/\,3.01 & +143 \\
\midrule
\multirow{2}{*}{ApolloScape}
  & GRIP++ & 0.55 & 1.48\,/\,3.74 & +153 \\
  & FQA    & 0.53 & 1.61\,/\,4.02 & +150 \\
\midrule
\multicolumn{2}{@{}l}{Average}
  & 0.50 & 1.17\,/\,2.89 & +148 \\
\bottomrule
\multicolumn{5}{l}{\footnotesize * Similar trends
are observed for FDE and RMSE.}
\end{tabular}
\vspace{-1em}
\end{table}

\begin{table}[h]
\centering
\caption{{Detection delay (WADD; in time steps) at MTFA $\approx 2$ under heavy-tailed emission misspecification.}
$\Delta$ (\%) denotes degradation relative to the Gaussian baseline.
}
\label{tab:rq2-misspec}
\footnotesize
\setlength{\tabcolsep}{3pt}
\begin{tabular}{lccc cc}
\toprule
 & \multicolumn{3}{c}{\textbf{WADD}$\downarrow$} & \multicolumn{2}{c}{\textbf{$\Delta$ (\%)}} \\
\cmidrule(lr){2-4}\cmidrule(lr){5-6}
\textbf{Method}
& {\bf Gaussian} & {\bf Laplace} & {\bf Student-$t$}
& LP & ST \\
\midrule
\rowcolor{babyblue}
\textbf{DC-MMD (Ours)} & {\bf 17.52} & {\bf 20.50} & {\bf 20.67} & \textbf{+17} & \textbf{+18} \\
Mode-CUSUM & 26.87 & 31.97 & 34.39 & +19 & +28 \\
GMM-CUSUM  & 36.18 & 43.42 & 47.03 & +20 & +30 \\
G-CUSUM    & 41.35 & 57.06 & 62.85 & +38 & +52 \\
lGMM       & 47.58 & 59.95 & 66.14 & +26 & +39 \\
NLL        & 59.39 & 73.05 & 80.77 & +23 & +36 \\
\bottomrule
\multicolumn{6}{l}{\footnotesize * LP = Laplace, ST = Student-$t$.}
\end{tabular}
\vspace{-5pt}
% \vspace{-2em}
\end{table}

\smallskip
\noindent\textbf{Evaluation Metrics.}
Following the Lorden minimax formulation
(Sec.~\ref{sec:problem}),
we report WADD (Eq. \eqref{eq:wadd})
and MTFA (Eq. \eqref{eq:mtfa}) as primary metrics,
with AUROC \cite{hanley1982meaning} for threshold-free separability between ID and OOD scores, and FPR@95 \cite{hendrycks2017baseline} for the safety-critical high-recall operating point.

\smallskip
\noindent\textbf{OOD Scene Generation.}
Following the controlled-shift method of~\cite{zhang2022adversarial}, we build {\it subtle OOD} scenes by applying bounded perturbations to observed trajectories. These are not artificial noise: they model the observable effect of real hazards---sensor failures, emergency-vehicle cut-ins, snow-covered roads, and unseen pedestrian behavior---that move agents while keeping the scene plausible. Such shifts are the most dangerous and hardest to detect, and even small displacements degrade prediction and planning~\cite{zhang2022adversarial}. We keep velocity, acceleration, and jerk within $\mu\pm3\sigma$ of training, and cap spatial deviation at 1\,m per point to preserve lane consistency. Since a lane is ${\approx}3.5$\,m wide, this bound stays physically valid yet still causes the large accuracy drop in Table~\ref{tab:ood-impact}; larger shifts would break lane semantics and make detection trivial. Perturbations averaging only ${\approx}0.5$\,m already degrade accuracy sharply across datasets and models. Once a shift begins, the per-block MMD statistic $D(\hat{\mu}_t,\lambda^{(0)})$ exceeds the offset $\zeta$, the CUSUM score $W_k$ builds up, and an alarm fires when $W_k$ crosses $b$---within the WADD bound of Theorem~\ref{thm: detection delay}.

\smallskip
\noindent\textbf{Implementation Details.}
All methods operate on the same scalar error sequence to ensure fair
comparison. The hyperparameters of DC-MMD
(Algorithm~\ref{alg:hmm_kernel_cusum}) are set as follows.
We set the block length to $m{=}50$, the best value from $\{25,50,75,100\}$.
Smaller $m$ is noisier; larger $m$ detects later; $m{=}50$ balances the two. The kernel $k$ is a Gaussian RBF with bandwidth $\sigma{=}0.8$ chosen by the median heuristic of~\cite{gretton2012kernel}; specifically, it matches the median pairwise distance of second-order ID samples pooled across training splits.
The offset $\zeta{=}0.05$, corresponding to the sample mean of MMD values on ID blocks, so that the CUSUM statistic $W_t$ has a negative drift under nominal operation.
For the detection delay versus false
alarm trade-off curves, the threshold $b$ is swept over a range so
that all methods are compared at matched MTFA.

\section{Results}
\label{sec:results}
For conciseness, results are reported using GRIP++ with ADE as the representative predictor--metric pair. Similar trends hold across other configurations.

\begin{figure*}[t!]
\vspace{0.6em}
    \centering
    \includegraphics[trim={0.5cm 0 1.8cm 0}, clip, width=\linewidth]{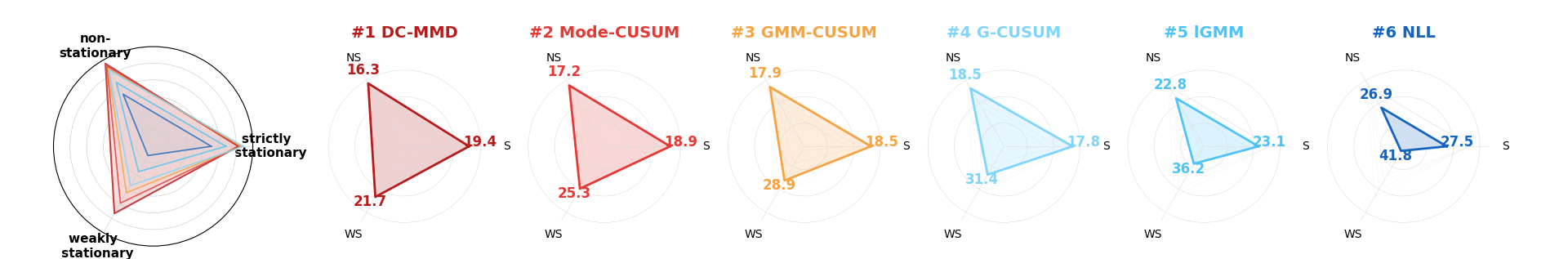}
    \caption{Latent-dynamic performance across benchmarks. Each radar axis represents a progressively weaker stationarity assumption, arranged clockwise: \textbf{S} (strictly stationary) $\to$ \textbf{WS} (weakly stationary) $\to$ \textbf{NS} (non-stationary). The Global Profile (left) overlays all six methods; individual panels isolate each method's footprint with per-axis average WADD annotated (at MTFA $\approx 2$ ; 500 runs).  Larger area $\Leftrightarrow$ WADD$\downarrow$ (faster change-point response).}
    \label{fig:rq3_radar_expanded}
\end{figure*}

\begin{figure*}[t]
    \centering
    \includegraphics[trim={1cm 0cm 0cm 0cm}, clip, width=1\textwidth]{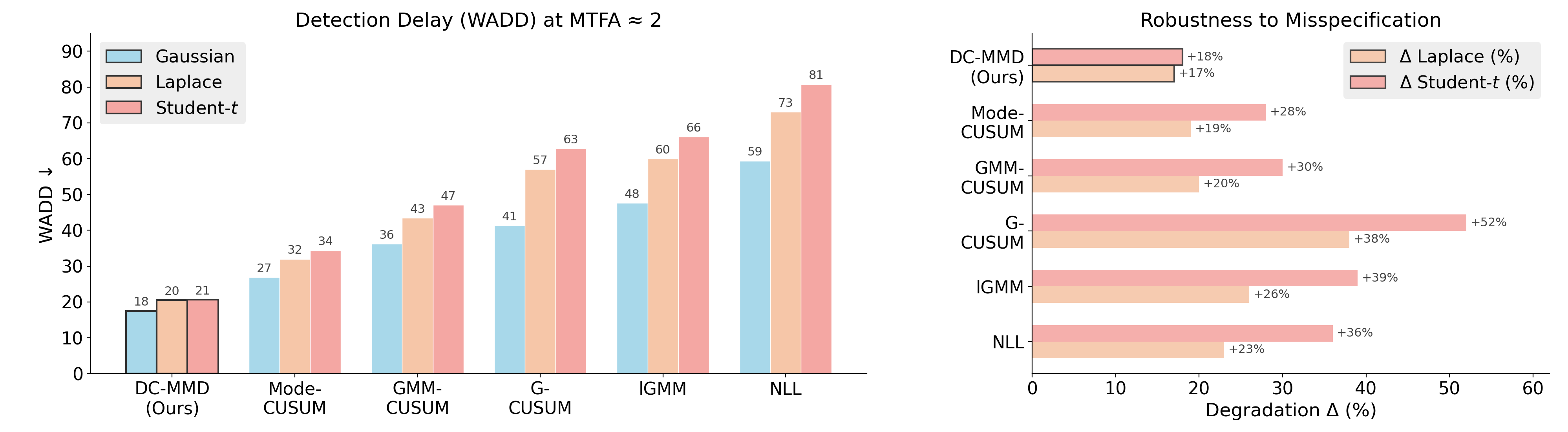}
    \caption{{Robustness to heavy-tailed distribution.}
    \textit{Left:} Detection delay (WADD) at $\text{MTFA} \approx 2$. {DC-MMD (Ours)} maintains the lowest delay across all distributions:
    Gaussian (light-tailed, {\color[HTML]{A9D0E5} blue}), Laplace (moderate-tailed, {\color[HTML]{EBC4A3} tan}), and Student-$t$ (heavy-tailed, {\color[HTML]{E6A39E} red}).}
    \label{fig:heavy_tail_robustness}
    \vspace{-0.5em}
\end{figure*}

\subsection{{How does DC-MMD compare with baselines
  across predictors and driving domains?}}
\label{subsec:rq1}
We evaluate this question using detection accuracy (Table~\ref{tab:rq1-performance}), delay--false-alarm trade-offs (Fig.~\ref{fig:rq1-frontiers}), and robustness under different latent dynamics (Fig.~\ref{fig:rq3_radar_expanded}).
DC-MMD has the strongest accuracy in every domain, reaching AUROC $0.84$ on ApolloScape while lowering FPR@95 below sequential baselines like Mode-CUSUM; point-wise methods (NLL, lGMM) stay near $0.6$, weak in high-entropy scenes. The frontier in Fig.~\ref{fig:rq1-frontiers} shows DC-MMD cutting WADD by $15$--$35\%$ across all three datasets at comparable MTFA. Fig.~\ref{fig:rq3_radar_expanded} adds that it keeps the lowest delay across i.i.d., static multi-modal, and Markov-switching regimes, where classical detectors fall off sharply. Notably, The WS axis is the hardest for most baselines because weakly stationary processes drift slowly while their marginals look in-distribution; DC-MMD avoids this by monitoring the joint distribution of $(e_{t-1}, e_t)$, which reflects transition-level shifts first.

\subsection{How robust is DC-MMD under heavy-tailed distribution?}
\label{subsec:rq2}

Table~\ref{tab:rq2-misspec} tests robustness to heavy-tailed emission misspecification at MTFA $\approx 2$, where DC-MMD already has the lowest delay. Likelihood-based detectors degrade most under Laplace and Student-$t$ emissions: G-CUSUM rises by $38\%$ and $52\%$, and Mode-CUSUM and GMM-CUSUM by $19$--$30\%$. DC-MMD stays stable, rising only $+17\%$ and $+18\%$---the smallest degradation of all methods (Fig.~\ref{fig:heavy_tail_robustness}).

\subsection{How robust is DC-MMD to unknown post-change distributions?} \label{subsec:rq3}

\begin{table}[t!]
\centering
\caption{{Detection delay (WADD; in time steps) when the post-change distribution is unknown.}
All methods operate at MTFA $\approx 1$.
Results are averaged over 500 independent runs.
Robust CUSUM uses a shift-based surrogate with $\kappa{=}2$ selected following~\cite{guo2024building}. }
\label{tab:rq3-unknown}
\footnotesize
\setlength{\tabcolsep}{5pt}
\begin{tabular}{lcc}
\toprule
\textbf{Method} & \textbf{Robust} & \textbf{Delay} \\
\midrule
\rowcolor{babyblue}
\textbf{DC-MMD (Ours)} & \checkmark & \textbf{16.7} \\
Robust CUSUM ($\kappa$-shift) & \checkmark & 16.4 \\
G-CUSUM (misspecified) & & 25.8 \\
NLL & & 32.6 \\
\bottomrule
\end{tabular}
% \vspace{-0.5em}
\end{table}

\begin{table}[t!]
\centering
\caption{Computational Complexity and Real-Time Suitability.}
\label{tab:complexity}
\resizebox{\columnwidth}{!}{%
\begin{tabular}{@{}lccc@{}}
\toprule
\textbf{Method} & \textbf{Complexity (Per Step)} & \textbf{Parameters} & \textbf{Real-Time} \\ \midrule
NLL~\cite{lee2018simple} & $\text{O}(1)$ & 0 & $\star \star \star$ \\
G-CUSUM~\cite{page1954continuous} & $\text{O}(1)$ & 0 & $\star \star \star$ \\
lGMM~\cite{wiederer2023joint} & $\text{O}(1)$ & Low & $\star \star \star$ \\
Deep Ensembles~\cite{lakshminarayanan2017simple} & $\text{O}(N)$ & $N \times$ Model & $\star$ \\ \midrule
\rowcolor{babyblue}
\textbf{DC-MMD (Ours)} & $\mathbf{\text{O}(1)^\ddagger}$ & \textbf{Minimal} & $\mathbf{\star \star \star}$ \\ \bottomrule
\end{tabular}%
}
\begin{flushleft}
\footnotesize $\star \star \star$ Excellent ($<$ 0.01ms), $\star$ Low ($>$ 10ms). \\
$^\ddagger$ Achieved via recursive sliding-window MMD updates.
\end{flushleft}
\vspace{-10pt}
% \vspace{-3em}
\end{table}

We next test an unknown post-change distribution, where DC-MMD needs no parametric assumption. As a baseline for this case we add Robust CUSUM, which assumes a minimal shift $\kappa{=}2$~\cite{guo2024building}, plus G-CUSUM and NLL, which have no robust design. In Table~\ref{tab:rq3-unknown}, DC-MMD and Robust CUSUM are comparable (16.7 vs.\ 16.4). G-CUSUM is slower (25.8) since its likelihood ratio is misspecified once the distribution leaves the assumed model, and NLL is slowest (32.6) for lacking CUSUM's optimal stopping. But Robust CUSUM matches us only when $\kappa$ is correct; with an unknown shift or non-Gaussian errors, it degrades too. DC-MMD avoids this: its kernel statistic is valid for any continuous distribution. We thus recommend it when the post-change regime is non-stationary or the shift cannot be pre-calibrated.

\subsection{Is DC-MMD deployable under real-time constraints?}
\label{subsec:rq4}

To evaluate the feasibility of DC-MMD for high-frequency robotics, we compare its per-step computational overhead against baselines in Table~\ref{tab:complexity}. Our design is decoupled from the primary model's parameter space, ensuring constant-time inference regardless of the predictor's depth.

\section{Conclusion and Discussion}
\label{sec:conclusion}
We presented {DC-MMD}, a latent-aware sequential change detector for runtime monitoring of trajectory prediction. It models pre-change errors as an HMM to obtain the reference $\lambda^{(0)}$, then applies a CUSUM rule to block-wise MMD statistics. Across NGSIM, nuScenes, and ApolloScape, DC-MMD cuts detection delay by 15--35\% over the strongest sequential baseline, stays stable under misspecification, and matches robust baselines in distribution-free settings---all at $O(1)$ per-step cost. It serves as a runtime monitor that triggers a fallback when OOD conditions appear; in-distribution calibration stays with the base predictor's UQ component, so the two are complementary. Future work will add perception-level signals and link detection to planner-level fallback for end-to-end runtime assurance.

\begingroup
\renewcommand{\baselinestretch}{0.95}
\scriptsize
\bibliographystyle{IEEEtran}
\bibliography{references}
\endgroup

\end{document}